\theoremstyle{definition}
\theoremstyle{plain}
\DeclareMathOperator*{\argmin}{arg \, min}
\newtheorem{definition}{Definition}[section]
\newtheorem{theorem}{Theorem}
\newtheorem{remark}{Remark}
\newtheorem{prop}{Proposition}
\def\Rset{\mathbb{R}}
\def\diag{\operatorname{diag}}
\def\SVD{{\sf SVD }}
\def\lpr{$\mathcal{GS}$}
\def\lpreq{\mathcal{GS}}
\title{Group and Shuffle: Efficient Structured Orthogonal Parametrization}
\author{%
    Mikhail Gorbunov \\
    HSE University \\
    \texttt{gorbunovmikh73@gmail.com}
    \And
    Nikolay Yudin \\
    HSE University
    \And
    Vera Soboleva \\
    AIRI\thanks{Artificial Intelligence Research Institute} 
    \And 
    Aibek Alanov \\
    $\text{AIRI}^*$, \\ HSE University
    \And 
    Alexey Naumov \\
    HSE University, \\ Steklov Mathematical Institute RAS
    \And 
    Maxim Rakhuba \\
    HSE University
}
\begin{document}

\maketitle
\begin{abstract}
     The increasing size of neural networks has led to a growing demand for methods of efficient fine-tuning. Recently, an orthogonal fine-tuning paradigm was introduced that uses orthogonal matrices for adapting the weights of a pretrained model. In this paper, we introduce a new class of structured matrices, which unifies and generalizes structured classes from previous works. We examine properties of this class and build a structured orthogonal parametrization upon it. We then use this parametrization to modify the orthogonal fine-tuning framework, improving parameter and computational efficiency. We empirically validate our method on different domains, including adapting of text-to-image diffusion models and downstream task fine-tuning in language modeling. Additionally, we adapt our construction for orthogonal convolutions and conduct experiments with 1-Lipschitz neural networks.

\end{abstract}

\section{Introduction}
Orthogonal transforms have proven useful in different deep learning tasks.
For example, they were shown to stabilize 
CNNs~\citep{li2019preventing,singla2021skew} or used in RNNs to combat the problem of
exploding/vanishing gradients \citep{arjovsky2016unitary}. 
Recent works OFT (Orthogonal Fine-Tuning) and BOFT (Butterfly Orthogonal Fine-Tuning) \citep{qiu2023controlling, liu2024parameterefficient} use learnable orthogonal matrices for parameter-efficient fine-tuning of neural networks, which prevents training instabilities and overfitting that alternative methods like LoRA \citep{hu2022lora} suffer from. 

Nevertheless, parametrization of orthogonal matrices is a challenging task, and the existing methods typically lack in either computational efficiency or expressiveness. 
Classical methods like Cayley parametrization and matrix exponential map cannot operate under low parameter budget, while Givens rotations and Householder reflections requires computing products of several matrices, which makes their use less efficient in deep learning tasks. 
Alternative approach in OFT method uses block-diagonal matrix structure in an attempt to be more computationally efficient and use less trainable parameters. 
Unfortunately, this simple structure can be too restrictive.
Thus, arises the problem of constructing dense orthogonal matrix while still being parameter-efficient. 
While attempting to tackle this task, BOFT method uses a variation of butterfly matrices, parametrizing orthogonal matrices as a product of several matrices with different sparsity patterns, enforcing orthogonality on each of them. 
This parametrization is able to construct dense matrices while still being parameter-efficient. 
However it requires to compute a product of multiple matrices (typically up to $6$) which can be computationally expensive. 
In this paper, we aim to overcome these issues and build dense orthogonal matrices in a more efficient way.

We present a novel structured matrix class parametrized by an alternating product of block-diagonal matrices and several permutations. 
Multiplying by these matrices can be seen as a consecutive application of independent linear transforms within certain small groups and then shuffling the elements between them, hence the name Group-and-Shuffle matrices (or \lpr-matrices for short).
This class generalizes Monarch matrices~\citep{pmlr-v162-dao22a} and with the right permutation choices, is able to form dense orthogonal matrices more effectively compared to approach proposed in BOFT, decreasing number of matrices in the product as well as the number of trainable parameters. We build efficient structured orthogonal parametrization with this class and use it to construct a new parameter-efficient fine-tuning method named GSOFT. 

Our contributions:
\begin{itemize}
\item We introduce a new class of structured matrices, called {\lpr}, that is more effective at forming dense matrices than block butterfly matrices from the BOFT method.
\item Using \lpr-matrices, we propose an efficient structured orthogonal parametrization, provide theoretical insights and study its performance in the orthogonal fine-tuning framework.
\item We adapt our ideas for convolutional architectures, providing a framework to compress and speed-up orthogonal convolution layers.
\end{itemize}


\section{Orthogonal Fine-tuning}\label{sec:oft}
Orthogonal Fine-tuning method (OFT) introduced in \citep{qiu2023controlling} is a Parameter-Efficient Fine-Tuning (PEFT) method which fine-tunes pre-trained weight matrices through a learnable orthogonal block-diagonal matrix. 
Some of the properties that make orthogonal transforms desirable are preservation of pair-wise angles of neurons, spectral properties and hyperspherical energy. 
More precisely, OFT optimizes an orthogonal matrix $Q \in \Rset^{d \times d}$ for a pre-trained frozen weight matrix $W^0 \in \Rset^{d \times n}$ and modifies the multiplication $y = (W^0)^\top x$ to 
$
    y = (Q W^0)^\top x.
$
Note that the identity matrix $I$ is orthogonal, which makes it a natural initialization for $Q$.
OFT uses block-diagonal structure for $Q$, parameterizing it as 
\[
    Q = \diag(Q_1, Q_2, \dots , Q_r),
\]
where $Q_i \in \Rset^{b \times b}$ are small orthogonal matrices and $br = d$.  
Orthogonality is enforced by Cayley parametrization, i.e. 
\[
    Q_i = (I + K_i)(I - K_i)^{-1},
\] 
where $K_i$ are skew-symmetric: $K_i = -K_i^\top$. This ensures orthogonality of $Q_i$ and, hence, of~$Q$. 

Nevertheless, block-diagonal matrices can be too restrictive, as they divide neurons into $r$ independent groups based on their indices. 
This motivates the construction of dense parameter-efficient orthogonal matrices. 
To address this problem, the Orthogonal Butterfly method (BOFT) was introduced \citep{liu2024parameterefficient}. 
BOFT uses block-butterfly structure to construct~$Q$.
Essentially, $Q$ is parameterized as a product of $m$ orthogonal sparse matrices:
\[
    Q = B_m B_{m - 1} \dots B_1.
\]
Each matrix $B_i$ is a block-diagonal matrix up to a permutation of rows and columns, consisting of $r$ block matrices of sizes $b \times b$. 
Similarly to OFT, the orthogonality is enforced by the Cayley parametrization applied to each block. 
However, BOFT method has some areas for improvement as well. 
To construct a dense matrix, BOFT requires at least 
\[
    m = 1 + \lceil \log_2 (r) \rceil
\]
matrices. 
For example, the authors of BOFT use $m=5$ or $6$ matrices in the BOFT method for fine-tuning of Stable Diffusion~\citep{rombach2022high}. 
Large amount of stacked matrices leads to significant time and memory overhead during training. 
There is also a room for improvement in terms of parameter-efficiency. 
To overcome these issues, we introduce a new class of structured matrices that we denote {\lpr} (group-and-shuffle) that generalizes Monarch matrices \citep{pmlr-v162-dao22a, fu2023monarch} and show how to use this class to construct parameter-efficient orthogonal parametrization. Similarly to BOFT, our approach uses block-diagonal matrices and permutations, but requires only
\[
    m = 1 + \lceil \log_b(r) \rceil
\]
matrices of the same size to construct a dense matrix.  
See details in Section \ref{sec:comp_boft}. 
The reduced requirements on $m$ allow us to use $m=2$ in experiments to maximize computational efficiency, while still maintaining accurate results.

\section{\lpr-matrices} \label{sec:gs}

Our motivation within this work is to utilize orthogonal matrices of the form:
\begin{equation} \label{eq:premain}
    A = {P}_{L} (L {P} R) {P}_{R}
\end{equation}
where matrices $L$ and $R$ are block-diagonal matrices with $r$ blocks of sizes $b \times b$ and $P_L, P, P_R$ are certain permutation matrices, e.g. $P_L = P^\top, P_R = I$ in the orthogonal fine-tuning setting and $P_R = P, P_L = I$ for convolutional architectures. 
Note that although the case $P_L = P^\top, P_R = I$ resembles Monarch matrices~\citep{pmlr-v162-dao22a},  they are unable to form such a structure, e.g., with equal-sized blocks in $L$ and $R$.
The issue is that the Monarch class has a constraint that interconnects the number of blocks in matrix $L$ and the number of blocks in matrix $R$ (see Appendix~\ref{sec:comp_monarch} for details).
Moreover, Monarch matrices have not been considered with orthogonality constraints.

To build matrices of the form~\eqref{eq:premain}, we first introduce a general class of \lpr-matrices and study its properties. We then discuss orthogonal matrices from this class in Section~\ref{sec:ortho}.

\subsection{Definition of \lpr-matrices}

\begin{definition}
An $m \times n$ matrix $A$ is in \lpr$(P_L, P, P_R)$ class with $k_L, k_R$ blocks and block sizes $b^1_L \times b^2_L$, $b^1_R \times b^2_R$ if
\[
    A = P_L (L P R) P_R,
\]
where $L = \diag(L_1, L_2, \dots, L_{k_L}), L_i \in \Rset^{b_L^1 \times b_L^2}$, $R = \diag(R_1, R_2, \dots, R_{k_R})$, $R_i \in \Rset^{b_R^1 \times b_R^2}$, $P_L, P, P_R$ are permutation matrices and $b_L^2 \cdot k_L = b_R^1 \cdot k_R = s, b_L^1 \cdot k_L = m, b_R^2 \cdot k_R = n$.
\end{definition}

\begin{wrapfigure}{r}{0.35\textwidth}
    \begin{center}
    \begin{tikzpicture}
        \node[inner sep=0pt] (diagram) at (0,0)
    {\includegraphics[width=.35\textwidth]{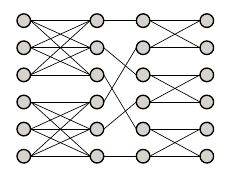}};
        \node[inner sep=0pt] (L) at (-1.1,2) {$L$};
        \node[inner sep=0pt] (P) at (0.1,2) {$P$};
        \node[inner sep=0pt] (R) at (1.3,2) {$R$};
        \node[inner sep=0pt] (x) at (2.,-2) {$x$};
        \node[inner sep=0pt] (Rx) at (0.7,-2) {$Rx$};
        \node[inner sep=0pt] (Rx) at (-0.45,-2) {$P(Rx)$};
        \node[inner sep=0pt] (Rx) at (-1.85,-2) {$L(PRx)$};

    \end{tikzpicture}
    \end{center}
    \caption{{\lpr}$(I, P, I)$ matrices with $b_L^1 = b_L^2 = 3$, $b_R^1 = b_R^2 = 2$, $k_L = 2, k_R = 3$. Edges between nodes denote nonzero weights.}
    \label{fig:gs_example}
\end{wrapfigure}

In practice, we fix $P_L, P, P_R$ depending on the application and only make matrices $L, R$ subject for change. 
\lpr-matrices are hardware-efficient, as they are parametrized by two simple types of operations that can implemented efficiently: multiplications by block-diagonal matrices and permutations. 

Let us also illustrate a forward pass $Ax\equiv LPRx$ for a matrix $A\in$\lpr$(I, P, I)$ as a building block for the more general class with two additional permutations.
The first operation $y = Rx$ consists of several fully-connected layers, applied individually to subgroups of $x$, see Figure~\ref{fig:gs_example}.
The next multiplication $LPy$ ensures that these groups interact with each other. Indeed, the permutation matrix~$P$ shuffles the entries of $y$ into new subgroups. These subgroups are then again processed by a number of fully-connected layers using $L$. 
This motivates the naming for our class of matrices: \emph{Group-and-Shuffle} or {\lpr} for short.

Another useful insight on these matrices is that the class \lpr$(I, P, I)$ consists of block matrices with low-rank blocks.
The permutation matrix $P$ is responsible for the formation of these blocks and defines their ranks (note that rank may vary from block to block).
The result below formally describes our findings and is key to the projection operation that we describe afterwards.

\begin{prop}\label{prop:lr}
Let $A$ be a matrix from \lpr$(I, P, I)$ with a permutation matrix $P$ defined by the function $\sigma: \{0,\dots,n-1\}\to \{0,\dots,n-1\}$.
Let $\{v_i^\top\}$ -- be the rows of the blocks $R_1,\dots,R_{k_R}$, $\{u_i\}$ -- the columns of the blocks $L_1,\dots,L_{k_L}$ in the consecutive order. 
Then the matrix $A$ can be written as a block matrix with $k_L \times k_R$ blocks using the following formula for each block $A_{k_1,k_2}$: 
\[
    A_{k_1,k_2} = \smashoperator[r]{\sum\limits_{\substack{ \lfloor \frac{\sigma(i)}{k_L} \rfloor = k_1 \\ \lfloor \frac{i}{k_R} \rfloor = k_2}}} u_{\sigma(i)} v_{i}^\top .
\]
Note that we use zero-indexing for this proposition for simplicity of formulas. 
\end{prop}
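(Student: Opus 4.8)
The plan is to expand the product $A = LPR$ into a sum of rank-one outer products and then sort those terms according to which block of $A$ each one populates. First I would write each factor through its elementary pieces. With zero-indexed standard basis vectors $e_0,\dots,e_{s-1}$ of $\Rset^s$, the row decomposition of the block-diagonal matrix gives $R = \sum_{i=0}^{s-1} e_i\,\tilde v_i^\top$, where $\tilde v_i^\top\in\Rset^{1\times n}$ is the $i$-th row of $R$ (a row of one of the blocks $R_j$, padded with zeros). Symmetrically, the column decomposition gives $L = \sum_{j=0}^{s-1} \tilde u_j\, e_j^\top$ with $\tilde u_j\in\Rset^m$ the $j$-th column of $L$, and the permutation acts by $P e_i = e_{\sigma(i)}$, i.e.\ $P = \sum_i e_{\sigma(i)} e_i^\top$.

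Next I would multiply these out, using only the orthonormality $e_a^\top e_b = \delta_{ab}$. This collapses the inner sums: $PR = \sum_i e_{\sigma(i)}\,\tilde v_i^\top$, and then $A = L(PR) = \sum_i \tilde u_{\sigma(i)}\,\tilde v_i^\top$, giving the global identity
\[
    A = \sum_{i=0}^{s-1}\tilde u_{\sigma(i)}\,\tilde v_i^\top
\]
as a sum of $s$ rank-one matrices.

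Finally I would localize each term. Because $L$ is block-diagonal, its column $\sigma(i)$ is supported only on the rows belonging to a single block-row of $A$, so the nonzero part of $\tilde u_{\sigma(i)}$ is exactly the reduced column $u_{\sigma(i)}\in\Rset^{b_L^1}$, and the index of that block-row is the floor quantity $\lfloor \sigma(i)/k_L\rfloor = k_1$ from the statement. Symmetrically, row $i$ of $R$ is supported on one block-column, with nonzero part $v_i\in\Rset^{b_R^2}$ and block-column index $\lfloor i/k_R\rfloor = k_2$. Hence every outer product $\tilde u_{\sigma(i)}\tilde v_i^\top$ is nonzero only inside the single block $(k_1,k_2)$ of the $k_L\times k_R$ partition, where it restricts to $u_{\sigma(i)} v_i^\top$. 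Grouping the $s$ global terms by the pair $(k_1,k_2)$ each one hits yields precisely the claimed block formula, and since $A_{k_1,k_2}$ is then a sum of only those outer products meeting both floor conditions, its rank is automatically bounded by the number of such indices.

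The linear algebra is essentially a one-line expansion, so the main obstacle is purely bookkeeping: correctly threading the permutation $\sigma$ through the triple product, and checking that the column-support of $L$ together with the row-support of $R$ confines each rank-one term to one block, so that the two floor conditions are exactly the block-membership constraints. Getting the zero-indexing and the block-size-versus-block-count divisors aligned is the only place where an error could realistically creep in.
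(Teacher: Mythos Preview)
Your proposal is correct and follows essentially the same idea as the paper: both arguments expand the product over the inner index into rank-one terms $u_{\sigma(i)} v_i^\top$ and then use the block-diagonal supports of $L$ and $R$ to assign each term to a single block $(k_1,k_2)$. The only cosmetic difference is order of operations---the paper first sets $R'=PR$, writes $A_{k_1,k_2}=L_{k_1}R'_{k_1,k_2}$ block by block, and then expands each such product into columns times rows, whereas you do one global expansion $A=\sum_i \tilde u_{\sigma(i)}\tilde v_i^\top$ and localize afterward.
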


\begin{figure}[H]
    \centering
        \begin{tikzpicture}
        \node[inner sep=0pt] (diagram) at (0,0)
        {\includegraphics[width=0.90\textwidth]{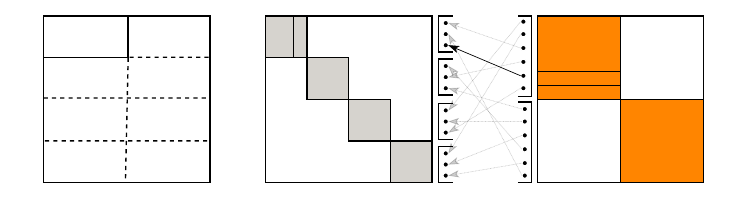}};
        \node[inner sep=0pt, font=\normalsize] (formula) at (-4.83,1.05) {\scalebox{0.83}{${\ldots + \textcolor{gray!50!black}{u_{2}} \textcolor{orange}{v_{4}^{\!\intercal}}}$}};
        \node[inner sep=0pt, font=\normalsize] at (-2.27,0.0) {$=$};
        \node[inner sep=0pt, font=\normalsize] at (-1.22,1.6) {{$u_{\hspace{-0.1em}2}$}};
        \node[inner sep=0pt, font=\normalsize] at (4.39,0.35) {{$v_{\hspace{-0.1em}4}^{\!\intercal}$}};
        \end{tikzpicture}
    \caption{Illustration of Proposition~\ref{prop:lr} that provides block low-rank interpretation of \lpr$(I,P,I)$ matrices. The matrix $R$ contains 2 blocks and matrix $L$ contains 4 blocks.}
    \label{fig:gslr}
\end{figure}

Let us illustrate this proposition in Figure~\ref{fig:gslr}.
We consider \lpr$(I,P,I)$ with ${k_L}=4$ and ${k_R} = 2$ blocks in $L$ and $R$ and with the block sizes $3\times 3$ and $6 \times 6$ respectively.
Let us consider the leading block $A_{00}$ of the size $3\times 6$.
According to Proposition~\ref{prop:lr}, 
$A_{00} = u_0 v_2^\top + u_2 v_4^\top$.
Indeed, let us take a closer look, e.g., at the term $u_2 v_4^\top$.
In the permutation matrix $P$, we have a nonzero element in the position $(2, 4)$ as $i=4$ and $\sigma(4)=2$. 
Therefore, we select the third column $u_2$ in $L_1$ and the fifth row  $v_4^\top$ in $R_1$. 
This leads to adding a rank-one term $u_2 v_4^\top$ to $A_{00}$ as we see in the formula above.

Another direct corollary from Proposition~\ref{prop:lr} is a projection operation $\pi\colon \mathbb{R}^{m\times n} \to \lpreq(P_L, P, P_R)$ that satisfies: 
\[
    \pi(A) \in \argmin_{B\in \lpreq(P_L, P, P_R)} \|A - B\|_F,
\]
where $\|\cdot\|_F$ is the Frobenius norm.
Thanks to the block low-rank representation of matrices from $\lpreq(P_L, P, P_R)$, the projection $\pi$ is simply constructed using SVD truncations of the blocks~$(P_L^\top A P_R^\top)_{k_1,k_2}$ and is summarized in Algorithm~\ref{alg:projection}.

\begin{algorithm}
\caption{Projection $\pi(\cdot)$ of $A$ onto \lpr$(P_L, P, P_R)$} \label{alg:projection}
\begin{algorithmic}
\State \textbf{Input:} $A, P_L, P, P_R$
\State \textbf{Return:} $L, R$ 

\For{$k_1 = 1 \dots k_L$}
\For{$k_2 = 1 \dots k_R$}
    \State Compute \SVD of $(P_L^T A P_R^T)_{k_1,k_2} = U \Sigma V^\top$;
    \State Set $r = r_{k_1,k_2}$ -- rank of block determined by $P$;
    \State Take $U_r = U[ :r, :], \, \Sigma_{r} = \Sigma[:r,:r], V_{r} = V[:r, :]$;
    \State Pack columns of $U_r \Sigma_r^{1/ 2}$ into $L_{k_1}$ and rows of $\Sigma_r^{1/2} V_r$ into $R_{k_2}$ according to $P$;
\EndFor
\EndFor
\end{algorithmic}
\end{algorithm}

\section{Orthogonal \lpr$(P_L, P, P_R)$ matrices} \label{sec:ortho}

In this section, we study the orthogonality constraint for the $\lpreq(P_L,P,P_R)$ to obtain structured orthogonal representation. 
This is one of the main contributions of our paper and we utilize this class in all the numerical experiments. 
Since we are interested only in square orthogonal matrices, we additionally assume that $m = n$ and $b_L^1 = b_L^2 = b_L; \; b_R^1 = b_R^2 = b_R$. 
Similarly to parametrizations in OFT and BOFT, a natural way to enforce orthogonality of \lpr$(P_L, P, P_R)$-matrices is to enforce orthogonality of each block of $L$ and $R$. 
This indeed leads an orthogonal matrix since permutation matrices are also orthogonal as well as a product of orthogonal matrices.
However, it is not immediately obvious that there exist no orthogonal matrices from \lpr$(P_L, P, P_R)$ that cannot be represented this way. 
Surprisingly, we find that such a way to enforce orthogonality is indeed sufficient for covering of all orthogonal matrices from \lpr$(P_L, P, P_R)$.

\begin{theorem}\label{th:2}
    Let $A$ be any orthogonal matrix from \lpr$(P_L, P, P_R)$.
    Then, $A$ admits $P_L(LPR)P_R$ representation with the matrices $L, R$ consisting of orthogonal blocks. 
\end{theorem}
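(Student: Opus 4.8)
The plan is to strip off the outer permutations and then use polar decomposition together with the uniqueness of positive-definite square roots. First I would observe that since $P_L$ and $P_R$ are orthogonal, $A = P_L(LPR)P_R$ is orthogonal if and only if $M := LPR$ is orthogonal, because $A^\top A = P_R^\top M^\top M P_R$ and conjugation by the orthogonal $P_R$ preserves the identity. Hence it suffices to show that any orthogonal $M = LPR$ with block-diagonal $L,R$ can be rewritten as $M = L'PR'$ with $L',R'$ having orthogonal blocks; the factors $P_L, P_R$ are then simply reattached. Note also that orthogonality of $M$ forces $M$, and therefore every block of $L$ and $R$, to be invertible, which I will need below.

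Next I would take the right polar decomposition of $L$ and the left polar decomposition of $R$. Because $L$ is block-diagonal, so is $L^\top L$, and thus its SPD square root $H_L := (L^\top L)^{1/2}$ is block-diagonal while $Q_L := L H_L^{-1}$ is block-diagonal with orthogonal blocks; this gives $L = Q_L H_L$. Symmetrically $R = H_R Q_R$, where $H_R := (R R^\top)^{1/2}$ is block-diagonal SPD and $Q_R := H_R^{-1} R$ is block-diagonal orthogonal. Substituting yields $M = Q_L\, H_L P H_R\, Q_R$, so the target reduces to showing that the scaling factors collapse through the permutation, i.e. $H_L P H_R = P$.

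The crucial step is to extract a relation between $H_L$ and $H_R$ from orthogonality. Expanding $M^\top M = I$ gives $R^\top (P^\top H_L^2 P) R = I$, hence $P^\top H_L^2 P = (R R^\top)^{-1} = H_R^{-2}$. Now both $P^\top H_L P$ and $H_R^{-1}$ are SPD and square to $H_R^{-2}$, so by uniqueness of the SPD square root they coincide: $P^\top H_L P = H_R^{-1}$, equivalently $H_L P = P H_R^{-1}$. Substituting, $H_L P H_R = P H_R^{-1} H_R = P$, and therefore $M = Q_L P Q_R$. Setting $L' = Q_L$, $R' = Q_R$ and reattaching $P_L, P_R$ completes the argument.

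I expect the point requiring the most care to be this square-root step. One must argue that conjugation by the orthogonal matrix $P$ commutes with taking SPD square roots, so that $(P^\top H_L P)^2 = P^\top H_L^2 P$, and then invoke uniqueness of the positive-definite square root to pass from an identity of squares to an identity of the matrices themselves. This uniqueness is precisely what forbids ``hidden'' orthogonal \lpr-matrices not covered by orthogonal blocks, and it is where invertibility of the blocks, hence genuine positive-definiteness rather than mere semidefiniteness, is essential.
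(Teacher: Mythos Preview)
Your proof is correct and takes a genuinely different route from the paper. The paper's argument leans on the block low-rank interpretation of Proposition~\ref{prop:lr}: after stripping $P_L, P_R$, it writes $A$ as a $k_L \times k_R$ block matrix with skeleton decompositions $A_{ij} = U_{ij}V_{ij}^\top$, $U_{ij}^\top U_{ij} = I_{r_{ij}}$, and then reads off from orthogonality of each block-column that $\begin{pmatrix} V_{1,j} & \cdots & V_{k_L,j}\end{pmatrix}$ is orthogonal, which is exactly the $j$-th block of $R$ up to a row permutation; the argument for $L$ is symmetric once the $V$'s are known to have orthonormal columns. You bypass the block low-rank structure entirely: global polar decompositions $L = Q_L H_L$, $R = H_R Q_R$ reduce the question to $H_L P H_R = P$, and you force this via uniqueness of the SPD square root applied to $P^\top H_L^2 P = H_R^{-2}$. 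Your route is slicker and self-contained, needing no auxiliary structural proposition; the paper's route is more explicitly constructive, exhibiting the orthogonal blocks directly as skeleton factors, which dovetails with their SVD-based projection in Algorithm~\ref{alg:projection}.
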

\begin{proof}
Matrices $P_L, P_R$ are orthogonal as they are permutation matrices. 
It means that it is sufficient to prove theorem in the case when $A$ is from \lpr$(I, P, I)$, which means that we can use low block-rank structure interpretation from Proposition~\ref{prop:lr}. 
Consider a skeleton decomposition of the blocks $A_{ij} = U_{ij} V_{ij}^\top$, $U_{ij} \in \Rset^{b_L \times r_{ij}}$, $V_{ij} \in \Rset^{b_R \times r_{ij}}$ such that $U_{ij}^\top U_{ij} = I_{r_{ij}}$ (this can be ensured, e.g., using the QR decomposition). Then
\[
    A = \begin{pmatrix}
        U_{1,1} V_{1,1}^\top & \dots & U_{1,k_L} V_{1,k_L}^\top \\
        \vdots & \ddots & \vdots \\
        U_{k_L,1} V_{k_L,1}^\top & \dots & U_{k_L,k_R} V_{k_L,k_R}^\top
        \end{pmatrix}.
\]
Take the $j$-th block-column of $A$. Since $A$ is an orthogonal matrix, we get:
\[
    \begin{pmatrix}
        V_{1,j} U_{1,j}^\top & \dots & V_{k_L,j} U_{k_L,j}^\top
    \end{pmatrix}
    \begin{pmatrix}
        U_{1,j} V_{1,j}^\top \\
        \vdots \\
        U_{k_L, j} V_{k_L,j}^\top
    \end{pmatrix} = I_{b_R}
\]
Multiplying matrices in the l.h.s. we get 
$ V_{1,j} U_{1,j}^\top U_{1,j} V_{1,j}^\top + \dots + V_{k_L,j} U_{k_L,j}^\top U_{k_L,j}  V_{k_L,j}^\top = I_{b_R}$. Since $U_{ij}^\top U_{ij} = I_{r_{ij}}$ we conclude $V_{1,j} V_{1,j}^\top + \dots + V_{k_L,j} V_{k_L,j}^\top = I_{b_R}$. This implies that $\begin{pmatrix}
            V_{1,j} & \dots & V_{k_L,j}
    \end{pmatrix}$ is an orthogonal matrix. 
Note that if we now parameterize $A = L P R$ with the matrices $V_{ij}$ packed into $R$ and $U_{ij}$ packed into $L$, then 
    $\begin{pmatrix}
            V_{1,j} & \dots & V_{k_L,j}
    \end{pmatrix}$ is exactly the $j$-th block matrix in $R$ up to permutation of rows. 
    Therefore, every block in $R$ is an orthogonal matrix. 
    Since we now proved that $V_{ij}^\top V_{ij} = I$, we can use same the derivation for the rows of $A$ and conclude that blocks of $L$ are also orthogonal. 
\end{proof}

\section{\lpr$(P_{m + 1}, \dots, P_{1})$ matrices}

In this section we describe an the extension of \lpr-matrices that uses more than two block-diagonal matrices and show that with the right permutations choices \lpr-matrices are more effective than block butterfly matrices in forming dense matrices. 
Here by dense matrices we imply matrices that do not contain zero entries at all.

\begin{definition}
$A$ is said to be in \lpr$(P_{m + 1}, \dots, P_{1})$  if
    \[
        A = P_{m + 1} \prod_{i = m}^{1} (B_i P_i),
    \]
    where each matrix $B_i$ is a block-diagonal matrix with $k_i$ blocks of size $b_i^1 \times b_i^2$, matrices $P_i$ are permutation matrices and $b_i^1 \cdot k_i = b_{i + 1}^2 \cdot k_{i + 1}$.
\end{definition}

\begin{remark}
    Similarly to the case $m=2$ described in Section~\ref{sec:gs}, we may use orthogonal blocks in $B_i$, $i=1,\dots,m+1$ to obtain orthogonal matrices. However, it is not clear if an analog to Theorem~\ref{th:2} is correct in this case as well.
\end{remark}

\begin{remark}
    For each of the classes of Block Butterfly matrices \citep{dao2022pixelated}, Monarch matrices \citep{pmlr-v162-dao22a} and order-p Monarch matrices \citep{fu2023monarch}, there exist permutation matrices $P_{m + 1}, \dots, P_1$ such that $\lpreq(P_{m + 1}, \dots P_1)$ coincides with a respective class. Indeed, Monarch matrices have the form of alternating block-diagonal matrices and permutations with some specific size constraints and sparse matrices in the product of Block Butterfly matrices can be easily transformed to block-diagonal matrices with permutations of rows and columns.
\end{remark}

\subsection{Choosing permutation matrices}

We suggest using the following matrices with $k = k_i$ for $P_i$. Note that this is efficient for forming dense matrices as follows from the proof of Theorem~\ref{th:m}. 
This is by contrast to the permutations used in~\citep{fu2023monarch} that are restricted to particular matrix sizes.

\begin{definition}[\citep{pmlr-v162-dao22a}]\label{def:perm}
    Let $P_{(k, n)}$ be a permutation matrix given by permutation $\sigma$ on $\{0, 1, \dots, n - 1\}$:
    \[
        \sigma(i) = (i \text{ mod } k) \cdot \frac{n}{k} + \left\lfloor \frac{i}{k} \right\rfloor.
    \]
\end{definition}

Applying this permutation to a vector can be viewed as reshaping an input of size $n$ into an $k \times \frac{n}{k}$ matrix in a row-major order, transposing it, and then vectorizing the result back into a vector (again in row-major column). We provide several examples of such permutations in Figure~\ref{fig:perm_example}.

\begin{figure}[H]
    \centering
    \resizebox{0.7\textwidth}{!}{
    \begin{tikzpicture}
    \node[inner sep=0pt] (diagram) at (0,0)
    {\includegraphics[width=\textwidth]{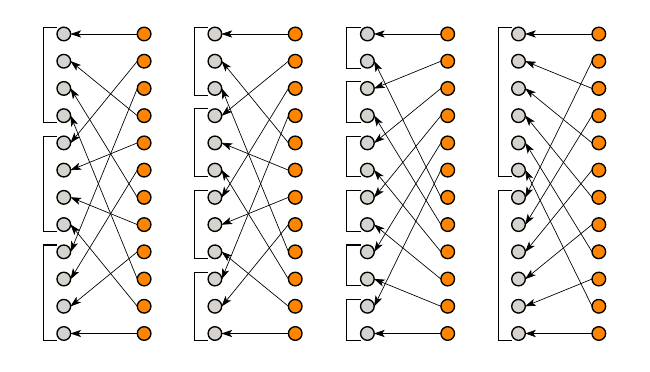}};
    \node[inner sep=0pt] (L) at (-5.25,-4) {$P_{(3, 12)}$};
    \node[inner sep=0pt] (L) at (-1.75,-4) {$P_{(4, 12)}$};
    \node[inner sep=0pt] (L) at (1.75,-4) {$P_{(6, 12)}$};
    \node[inner sep=0pt] (L) at (5.25,-4) {$P_{(2, 12)}$};
    \end{tikzpicture}
    }
    \caption{Illustraion of $P_{(k, 12)}$ permutations for $k \in \{3, 4, 6, 2\}$.}
    \label{fig:perm_example}
\end{figure}

\subsection{Comparison to block butterfly matrices and BOFT}\label{sec:comp_boft}
Block Butterfly matrices were introduced in \citep{dao2022pixelated} and are used to construct orthogonal matrices in the BOFT method. 
Block Butterfly matrix class is a special case of higher-order \lpr-matrices with $k_i = r$ and $b_i^1 = b_i^2 = b = 2s$ and certain permutation choices. However, we argue that the choice of these permutations are sub-optimal for construction of dense matrix and using permutations from Definition \ref{def:perm} is more effective. 
When using block-diagonal matrices with $r$ blocks, block butterfly matrices need $1 + \lceil \log_2(r) \rceil$ matrices to construct a dense matrix. For \lpr-matrices we have the following result.
\begin{theorem}\label{th:m}
    Let $k_i = r, b_i^1 = b_i^2 = b$. Then using $m = 1 + \lceil \log_b(r) \rceil$ is sufficient for the class \lpr$(P_L, P_{(k, br)}, \dots ,P_{(k, br)}, P_R)$ to form a dense matrix for any $P_L, P_R$. Moreover, the choice of $P_2 = \dots = P_m = P_{(k, b)}$ is optimal in the sense that all matrices from \lpr$(P_{m + 1}, \dots, P_{1})$ contain zero blocks for any integer $m < 1 + \lceil \log_b(r) \rceil$ and any permutations $P_1,\dots,P_{m+1}$.
    
\end{theorem}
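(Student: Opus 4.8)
The plan is to argue entirely at the level of the sparsity pattern (Boolean support) of the product, since a matrix is dense exactly when its support is the all-ones pattern, and to treat the two claims separately.

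For sufficiency I would first note that density is invariant under multiplication by permutation matrices on either side: $A = P_L M P_R$ is dense iff $M$ is. Hence I may assume $P_L = P_{m+1} = I$ and $P_R = P_1 = I$ and analyze only $M = B_m P B_{m-1} P \cdots P B_1$ with $m$ block-diagonal factors and $m-1$ copies of $P = P_{(r,br)}$, where $n=br$. The crucial computation is that, since $\lfloor i/r\rfloor \le b-1$ for $i<br$, the permutation $\sigma$ of Definition~\ref{def:perm} satisfies $\lfloor \sigma(i)/b\rfloor = i \bmod r$. Thus $P$ sends the $b$ indices of one diagonal block $\{cb,\dots,cb+b-1\}$ into $b$ indices lying in the blocks $T(c):=\{(cb+s)\bmod r : 0\le s<b\}$. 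Reading $M$ as a forward pass from a single input $v$, the first factor $B_1$ fills the whole block containing $v$, and each subsequent pair $P B_j$ spreads the currently reached blocks according to $T$ and then fills them. A short induction then shows that after $\ell$ applications of $T$ the set of reached blocks equals the consecutive run $\{c_0 b^\ell + x \bmod r : 0 \le x < b^\ell\}$, which is all of $\{0,\dots,r-1\}$ exactly when $b^\ell \ge r$. With $m = 1+\lceil \log_b r\rceil$ there are $\ell = m-1 = \lceil \log_b r\rceil$ such steps, so $b^\ell \ge r$, every block is reached and is then completely filled by $B_m$; hence the support of $M$ is full. Taking all block entries positive turns full support into genuine density (every surviving entry is a sum of products of positive numbers), which proves the first claim for arbitrary $P_L, P_R$.

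For the optimality claim I would use a degree/reachability count that is insensitive to the choice of permutations. The support bipartite graph of a block-diagonal matrix with blocks of size $b$ has all degrees at most $b$, and permutations have degree one, so in the product $P_{m+1}(B_m P_m)\cdots(B_1 P_1)$ any group of $b$ output coordinates belonging to a single block of $B_m$ collectively depends on at most $b\cdot b^{m-1}=b^m$ input coordinates. When $m \le \lceil \log_b r\rceil$ one checks $b^m < br = n$ in both the power-of-$b$ and the general case (it reduces to $b^{\lceil\log_b r\rceil-1}<r$), so for each such output group $I$ (there are $r$ of them, determined by $B_m$ and $P_{m+1}$) the set $J$ of inputs it fails to reach is nonempty, and the submatrix $A_{I,J}$ vanishes identically. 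This exhibits a genuine zero block for every matrix in the class and every choice of $P_1,\dots,P_{m+1}$, which is exactly the asserted optimality.

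I expect the main obstacle to be the sufficiency direction, specifically isolating and proving the block-transition identity $\lfloor \sigma(i)/b\rfloor = i \bmod r$ and the resulting consecutive-run description of the reachable blocks; this is what makes the argument uniform in $r$ (rather than forcing $r$ to be a power of $b$) and what pins down the exact threshold $\lceil \log_b r\rceil$. By contrast the optimality direction is robust and essentially a counting bound, requiring only the elementary degree estimate together with the comparison $b^m<n$.
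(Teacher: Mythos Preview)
Your proposal is correct and follows essentially the same approach as the paper. Both arguments are reachability computations in the ``information transition'' graph: for sufficiency you track the set of reached blocks after each factor $PB_j$ (via the identity $\lfloor \sigma(i)/b\rfloor = i \bmod r$ and the consecutive-run description), while the paper phrases the same computation per node, asserting that each starting node reaches $b^k$ consecutive (hence distinct) nodes after $k$ layers; for optimality both reduce to the degree bound that at most $b^m$ inputs can be reached from any output (you group $b$ outputs of a single $B_m$-block, the paper counts paths from a single node), which is strictly less than $n=br$ when $m\le\lceil\log_b r\rceil$. Your write-up is simply a more explicit rendering of the paper's terse figure-based argument.
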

\begin{proof}
    See Appendix \ref{appx:th_m}.
\end{proof}

For example, let us consider a case of constructing a dense orthogonal matrix of the size $1024 \times 1024$. 
Suppose also that we use block matrices with block size $32$. 
Constructing a dense matrix with Block Butterfly matrices requires $1 + \log_2(32) = 6$ butterfly matrices, which leads to $6 \times 32^3$ parameters in the representation. 
\lpr$(P_L, P, P_R)$ matrices with $P = P_{(32, 1024)}$ only need two matrices to construct a dense matrix yielding $2 \times 32^3$ parameters. 
The \lpr$(P_L, P, P_R)$ parametrization is also naturally more computationally efficient as fewer number of multiplications is both faster and requires less cached memory for activations. 

\section{Applications}

\label{seq:gsoft}
\subsection{Orthogonal fine-tuning with \lpr$(P_L, P, P_R)$ (GSOFT)} 

We utilize the pipeline of OFT and BOFT methods with the exception of parametrizing $Q$ with orthogonal permuted \lpr$(P_L, P, P_R)$ matrices. 
In particular, for parametrization of $Q \in \Rset^{d \times d}$, we utilize the \lpr($P^\top, P, I$) class, i.e. $Q = P^\top L P R$, where $L = \text{diag}(L_1, \dots L_r)$, $L_i \in \Rset^{b \times b}$, $R = \text{diag}(R_1, \dots , R_r)$, $R_i \in \Rset^{b \times b}$. 
For consistency, we use the same notation for the number of blocks and block sizes as in BOFT and OFT methods. 
We use $P_{(r, br)}$ as a permutation matrix $P$. To enforce orthogonality, we parameterize each block in matrices $L, R$ with the Cayley parametrization. 
We initialize $Q$ as an identity matrix by initializing each block to be an identity matrix. 
Additional techniques like magnitude scaling and multiplicative dropout that are used in OFT and BOFT can be utilized the same way in our method, though we only use scaling in our experiments. Note that likewise in OFT, BOFT weights of the matrix $Q$ can be merged with the pretrained weight $W$ producing no inference overhead.

\subsection{Two-sided orthogonal fine-tuning (Double GSOFT)}

Consider \SVD decomposition of a matrix $W^0 = U \Sigma V^\top$. Applying orthogonal fine-tuning, we get $W' = (Q U) \Sigma V^\top$, 
which is an \SVD decomposition for the adapted weight $W'$. This shows that we can only change left singular vectors $U$ with the standard orthogonal fine-tuning paradigm. At the same time, the LoRA method  modifies both matrices $U$ and~$V$. Moreover, recent papers \citep{meng2024pissa, li2023loftq} show that initializing matrices $A, B$ with singular vectors can additionally boost performance of LoRA. This motivates an extension of orthogonal fine-tuning method, that can adapt both matrices $U$ and $V$. We introduce a simple approach that multiplies pre-trained weight matrices from both sides, rather than one. This method modifies forward pass from z = $(W^0)^\top x$ to 
\[
    z = (Q_{U} W^0 Q_{V})^\top x
\]
Where $Q_U$ and $Q_V$ are parametrized as orthogonal \lpr-matrices. In cases where BOFT utilizes 5-6 matrices, we can leverage the fact that our method uses only 2 and adapt both sides while still using less matrices and trainable parameters than BOFT.

\label{sec:ortho_conv}
\subsection{{\lpr} Orthogonal Convolutions}
Recall, that due to linearity of a multichannel convolution operation, we can express the convolution of tensor $X \in \mathbb{R}^{c_{in} \times h \times w}$ with a kernel $L \in \mathbb{R}^{c_{out} \times c_{in} \times k \times k}$ $L \star X$ in terms of matrix multiplication \citep{singla2021skew}:
\begin{equation}\label{eq:convmat}
Y = L \star X \quad \Leftrightarrow \quad vec(Y) = \begin{bmatrix}
    L_{0,0} & \dots & L_{0,c_{in}-1}\\
    \vdots & \ddots & \vdots\\
    L_{c_{out}-1,0} & \dots & L_{c_{out}-1,c_{in}-1}
\end{bmatrix}vec(X),
\end{equation}
where $L_{i,j}$ is doubly Toeplitz matrix, corresponding to convolution between $i$-th and $j$-th channels and $vec(X)$ is a vectorization of tensor into a vector in a row-major order. 
Thus, the convolution is essentially a block matrix, where each block represents a standard convolution operation. 
Using this block interpretation~\eqref{eq:convmat}, we may apply the concept of {\lpr} matrices to convolutional layers as well. 
Considering each convolution between channels as an element of our block matrix, we can set some of these blocks to zero, obtaining some additional structure. 
Thus, we can construct block matrix which has block-diagonal structure, corresponding to grouped convolution (further, in all equations we will denote it as $\texttt{GrConv}$).
Then, defining $\texttt{ChShuffle}$ as a permutation of channels, like in \citep{zhang2017shufflenet}, we obtain structure, which is similar to GSOFT, defined in Section \ref{seq:gsoft}:

\begin{equation}\label{eq:gs_conv}
Y = \texttt{GrConv}_2(\texttt{ChShuffle}_2(\texttt{GrConv}_1(\texttt{ChShuffle}_1(X)))).
\end{equation}
The proposed  {\lpr} convolutional layer shuffles information between each pair of input channels and requires less parameters and FLOPs during computations. 
In this example we can also choose permutations of channels and change kernel size.
This convolutional layer can be treated as \lpr$(P_{m + 1}, \dots, P_{1})$ matrix in vectorized view, that is why choosing permutations between convolutional layers is also very important for information transition properties.
In Appendix \ref{appendix:gs_soc} we explain the choice of \texttt{ChShuffle} operation.

We can use the proposed layer to construct orthogonal convolutions (transformations with an orthogonal Jacobian matrix) similarly to skew orthogonal convolution (SOC) architecture, that uses Taylor expansion of a matrix exponential. 
One major downside of methods such as SOC and  BCOP \citep{li2019preventing} is that they require more time than basic convolution operation.
For instance, in the SOC method, one layer requires multiple applications of convolution ($6$ convolutions per layer).
In our framework, we propose a parametrization of a convolutional layer, in which imposing an orthogonality to convolutions has fewer number of FLOPs and parameters thanks to the usage of grouped convolutions.

Let us discuss in more details how SOC works and the way we modify it. 
In SOC, a convolutional filter is parametrized in the following way:
\[
L = M - \texttt{ConvTranspose}(M),
\]
where $M \in \mathbb{R}^{c_{in}\times{c_{out}}\times r\times s}$ is an arbitrary kernel and the \texttt{ConvTranspose} is the following operation:
\[
\texttt{ConvTranspose}(M)_{i, j, k, l} = M_{j, i, r-k-1, s-l-1}
\]
This parametrization of filter $L$ makes the matrix from Equation \ref{eq:convmat} skew-symmetric.
As matrix exponential of skew-symmetric matrix is an orthogonal matrix, in SOC the authors define convolution exponential operation, which is equivalent to matrix exponential in matrix-vector notation:
\begin{definition}\citep{singla2021skew}
    Let $X \in \mathbb{R}^{c\times h\times w}$ be an input tensor and $L \in \mathbb{R}^{c \times c \times k \times k}$ be a convolution kernel. Then, define convolution exponential $L \star_e X$ as follows:
    \[
    L \star_e X = X + \frac{L \star X}{1!} + \frac{L \star^2 X}{2!} + \dots
    \]
    where $L \star^i X$ is a convolution with kernel $L$ applied $i$ times consequently.
\end{definition}
As mentioned above, with proper initialization we get a convolutional layer with orthogonal Jacobian matrix.
Using the parametrization of convolution layer from the Equation \ref{eq:gs_conv} and substituting there two grouped convolution exponentials (e.g. in our parametrization we have the same convolution exponential, but we have grouped convolution instead of basic one) with the parameterized kernel:
\[
Y = \texttt{GrExpConv}_2(\texttt{ChShuffle}_2(\texttt{GrExpConv}_1(\texttt{ChShuffle}_1(X))))
\]
In our experiments we tried different layer architectures and we found that making kernel size of the second convolution equal to 1 speeds up our convolutional layer, maintaining quality metrics. Thus, if convolutional layer consists of two grouped convolutional exponentials, the second convolutional exponential has $kernel\_size = 1 \times 1$

\section{Experiments}\label{sec:exps}
All the experiments below were conducted on NVIDIA V100-SXM2-32Gb GPU. We ran all the experiments within $\sim$2000 GPU hours.

\subsection{Natural language understanding}

We report result on the GLUE \citep{wang2018glue} benchmark with RoBERTa-base \citep{liu2019roberta} model. Benchmark includes several classification tasks that evaluate general language understanding. We follow training settings of \citep{liu2024parameterefficient, zhang2023adaptive}. We apply adapters for all linear layers in the attention and MLP and only tune learning rate for all methods. Table~\ref{tab:glue} reports best results on the evaluation set from the whole training. LoRA, OFT and BOFT are implemented with PEFT library~\citep{peft}. GSOFT method outperforms OFT, BOFT and also have a slight edge over LoRA. Note that even though skew-symmetric $K$ theoretically matrix only requires approximately half the parameters of a full matrix, in practice it is parametrized as $K = A - A^T$ for the ease of computations. However, after fine-tuning, one can only save upper-triangular part of $K$. Doing this, orthogonal fine-tuning methods become approximately 2 times more efficient in terms of memory savings.

\begin{table}[h]
\small
\caption{Results on GLUE benchmark with RoBERTa-base model. We report Pearson correlation for STS-B, Matthew's correlation for CoLA and accuracy for other tasks. \# Params denotes number of trainbale parameters}\label{tab:glue}
\resizebox{\textwidth}{!}{
\begin{tabular}{c|cccccccccc}
\toprule\\
     Method & \# Params & MNLI & SST-2 & CoLA & QQP & QNLI & RTE & MRPC & STS-B & ALL \\
     \hline
     FT & 125M & \underline{87.62} & 94.38 & 61.97 & \textbf{91.5}& \textbf{93.06} & 80.14 & 88.97 & \textbf{90.91}& 86.07 \\
     $\text{LoRA}_{r=8}$ & 1.33M & \textbf{87.82} & \textbf{95.07} & 64.02 & \underline{90.97} & 92.81 & \textbf{81.95} & 88.73 & \underline{90.84} & \underline{86.53} \\
     $\text{OFT}_{b=16}$ & 1.41M & 87.21 & \textbf{95.07}& \underline{64.37} & 90.6 & 92.48 & 79.78 & \underline{89.95} & 90.71 & 86.27  \\
     $\text{BOFT}_{b=8}^{m=2}$ & 1.42M & 87.14 & 94.38 & 62.57 & 90.48 & 92.39 & 80.14 & 88.97 & 90.67 & 85.84 \\
     \hline 
     $\text{GSOFT}_{b=8}$ & 1.42M & 87.16 & \textbf{95.06} & \textbf{65.3} & 90.46 & 92.46 & \textbf{81.95} & \textbf{90.2} & 90.76 &  \textbf{86.67}\\
     \bottomrule
\end{tabular}
}

\end{table}

\subsection{Subject-driven generation}
Subject-driven generation \citep{DB, TI} is an important and challenging task in the field of generative modelling. Given several photos of a particular concept, we want to introduce it to the diffusion model so that we can generate this particular object in different scenes described by textual prompts. The main way to do this is to fine-tune the model. However, the large number of fine-tuning parameters together with the lack of training images make the model prone to overfitting, i.e. the model reconstructs the concept almost perfectly, but starts to ignore the textual prompt during generation. To solve this problem and stabilize the fine-tuning process, different lightweight parameterizations \citep{qiu2023controlling, liu2024parameterefficient, hu2022lora, r1e, svdiff} and regularization techniques \citep{DB,CD} are widely used in this task. Therefore, we chose this setting to evaluate the effectiveness of the proposed orthogonal parameterization compared to other approaches. 

We use StableDiffusion~\citep{rombach2022high} and the Dreambooth~\citep{DB} dataset for all our experiments. The following parameterizations were considered as baselines in this task: full (q, k, v and out.0 layers in all cross- and self- attentions of the UNet are trained), LoRA \citep{hu2022lora} and BOFT \citep{liu2024parameterefficient} applied to the same layer. We use our GSOFT parameterization and a two-sided orthogonal GSOFT (Double GSOFT) applied to the same layers as baselines. For a more comprehensive comparison, we consider different hyperparameters for the models, adjusting the total number of optimized parameters. More training and evaluation details can be found in Appendix~\ref{appendix:comp_sd}.

CLIP image similarity, CLIP text similarity and visual comparison for this task are presented in Table \ref{table:pose_diff} and Figure \ref{fig:gen-samples}. As the results show, GSOFT and DoubleGSOFT are less prone to overfitting compared to the baselines. They show better alignment with text prompts while maintaining a high level of concept fidelity. Furthermore, both methods with optimal hyperparameters are more efficient than BOFT and comparable to LoRA and full parameterization in terms of training time. See Appendix~\ref{appendix:comp_sd} for more visual and quantitative comparison.

\begin{table}[t]
\centering
\caption{Results on subject-driven generation. \# Params denotes the number of training parameters in each parametrization. Training time is computed for 3000 iterations on a single GPU V100 in hours.} 
\resizebox{\textwidth}{!}{

\begin{tabular}{lc*{14}{c}}
\toprule \\
\multirow{3}{*}{\textbf{Model}} & \multicolumn{1}{c}{\textbf{Full}} & \multicolumn{3}{c}{\textbf{LoRA}} & \multicolumn{3}{c}{\textbf{BOFT}} &
\multicolumn{3}{c}{\textbf{GSOFT (Ours)}} & \multicolumn{3}{c}{\textbf{Double GSOFT (Ours)}} \\
\cmidrule(lr){3-5}
\cmidrule(lr){6-8}
\cmidrule(lr){9-11}
\cmidrule(lr){12-14}
& \multicolumn{1}{c}{} & \multicolumn{3}{c}{rank} & \multicolumn{3}{c}{$r$, $m$} & \multicolumn{3}{c}{$r$} & \multicolumn{3}{c}{$r$} \\
\cmidrule(lr){3-5}
\cmidrule(lr){6-8}
\cmidrule(lr){9-11}
\cmidrule(lr){12-14}
& & 4 & 32 & 128 
& 32, 4 & 32, 6 & 16, 5  
& 32 & 16 & 8
& 64 & 32 & 16 \\            
\midrule
\# Params & 99.9M & 0.8M & 6.6M & 26.6M & 13.6M & 20.4M & 33.8M 
& 6.8M & 13.6M & 27.1M
& 6.5M & 13.0M & 25.9M \\
Training time & 1.3 & 1.3 & 1.3 & 1.3 & 2.0 & 2.2 & 2.3 
& 1.5 & 1.6 & 1.8
& 1.7 & 2.0 & 1.8 \\
CLIP-I $\uparrow$ & 0.805 & 
0.805 & \textbf{0.819} & 0.813 & 
0.803 & 0.796 & 0.789 &
0.805 & 0.803 & 0.783 & 
\textbf{0.815} & 0.802 & 0.783 \\
CLIP-T $\uparrow$ & 0.212 &
0.246 & 0.236 & 0.223 &
0.244 & 0.234 & 0.223 & 
\textbf{0.256} & 0.245 & 0.227 & 
\textbf{0.256} & 0.242 & 0.225 \\
\bottomrule
\end{tabular}
}
\label{table:pose_diff}
\end{table}
\begin{figure}[t]
  \centering
  \includegraphics[width=\linewidth]{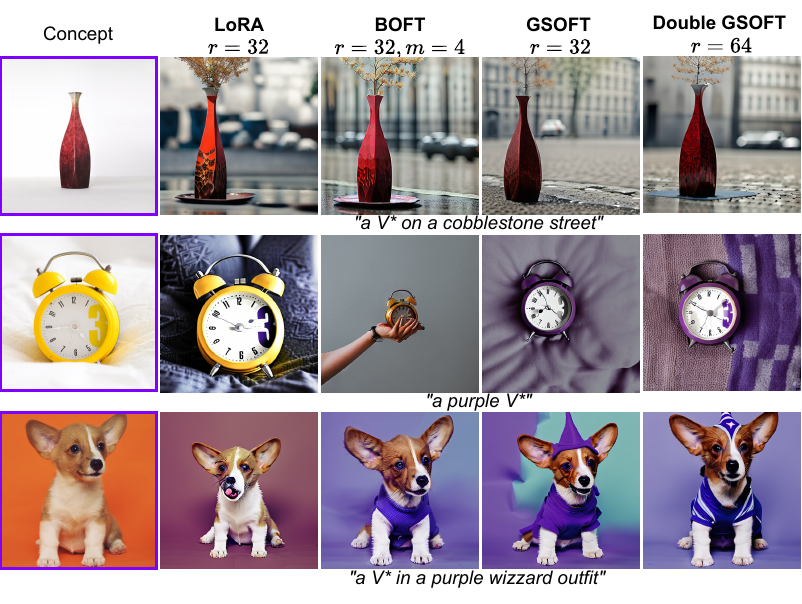}
  \caption{Subject-driven generation visual results on 3000 training iterations.}
  \label{fig:gen-samples}
\end{figure}

\subsection{{\lpr} Orthogonal Convolutions}
Following \citep{singla2021skew}, we train LipConvnet-n on CIFAR-100 dataset. LipConvnet-n is 1-Lipschitz neural network, i.e. neural network with Lipschitz constant equal to 1, his property provides certified adversarial robustness. LipConvnet uses orthogonal convolutions and gradient preserving activations in order to maintain 1-Lipschitz property.

LipConvnet-n architecture consists of 5 equal blocks, each having $\frac{n}{5}$ skew orthogonal convolutions, where the last convolution at each level downsamples image size. 
We replace the skew orthogonal convolution layer with the structured version using \lpr orthogonal convolutions and test it in the setting of \citep{singla2021skew}, using the same hyperparameters (learning rate, batch size and scheduler stable during testing). In layers where we have two \texttt{GrExpConv}, the second convolution has kernel size equal to 1.

We also use a modified activation function (MaxMinPermuted instead of MaxMin), which uses different pairing of channels. This makes activations aligned with the \texttt{ChShuffle} operation and grouped convolutions. The choice of permutation for \texttt{ChShuffle} also slightly differs from permutations defind in Definition~\ref{def:perm} because of the interplay between activations and convolutional layers. We provide definitions and intuition regarding activations and permutations for \texttt{ChShuffle} in Appenix~\ref{appendix:gs_soc}.
\begin{table}[H]
\small
\centering

\caption{Results of training LipConvnet-15 architecture on CIFAR-100. $(a, b)$ in ``Groups'' column denotes that we have to grouped exponential convolutions (the first one with $kernel\_size=3$, the second with $kernel\_size=1$). If $b = 0$, we have only one {\lpr} orthogonal convolutional layer. Before each grouped layer with $k$ groups use a $\texttt{ChShuffle}$ operator.}\label{tab:cifar100}

\resizebox{\textwidth}{!}{
\begin{tabular}{c|ccccccc}
    \toprule \\
     Conv. Layer & \# Params & Groups & Speedup & Activation & Accuracy & Robust Accuracy\\
     \hline
     SOC & 24.1M & - & 1 & MaxMin &43.15\% & 29.18\%\\
     GS-SOC & \textbf{6.81M} & (4, -) & \textbf{1.64} & MaxMinPermuted & \textbf{43.48\%} & \textbf{29.26\%}\\
     GS-SOC & \textbf{8.91M} & (4, 1) & 1.21 & MaxMinPermuted & \textbf{43.42\%}& \textbf{29.56\%}\\
     GS-SOC & \textbf{7.86M} & (4, 2) & 1.22 & MaxMinPermuted & 42.86\% & 28.98\%\\
     GS-SOC & \textbf{7.3M} & (4, 4) & 1.23 & MaxMinPermuted & 42.75\% & 28.7\%\\

     \bottomrule
\end{tabular}
}

\end{table}

\section{Concluding remarks}
In this paper, we introduce a new class of structured matrices, called \lpr-matrices, build a structured orthogonal parametrization with them and use them in several domains within deep learning applications. 
However, we hope that our orthogonal parametrization can be adapted to different settings in future (including tasks outside of deep learning), as it makes orthogonal parametrizations less of a computational burden. 
\lpr-matrices without orthogonality constraints is another promising direction to consider.

\section{Limitations}\label{sec:limitations}
Although our method for orthogonal fine-tuning is faster than BOFT, it is still slower than LoRA during training. 
Additionally, since our parametrization provides a trade-off between expressivity and parameter-efficiency, it might be unable to represent some particular orthogonal matrices, which might be required in other settings apart from parameter-efficient fine-tuning.

\bibliographystyle{plainnat}
\bibliography{biblio.bib}

\appendix

\section{Related work}

\textbf{Parameter-Efficient Fine-Tuning (PEFT)}
With the growth of model sizes, end-to-end training became unavailable for those who want to adapt  powerful architectures for specific tasks, as even full fine-tuning became too expensive.
This problem sparked research in the direction of parameter-efficient fine-tuning methods, including methods that focus on prompt tuning \citep{lester2021power, li-liang-2021-prefix} and adapter tuning (e.g. \citep{houlsby2019parameter, karimi2021compacter}), which include LoRA \citep{hu2022lora} and its variations \citep{meng2024pissa, zhang2023adaptive, liu2024dora, dettmers2024qlora, li2023loftq}, that inject learnable low-rank matrices as an additive injection to the weights of pretrained models. OFT \citep{qiu2023controlling}, BOFT \citep{liu2024parameterefficient} and our method use similar approach to LoRA, but learn multiplicative injection rather than an additive one.

\textbf{Structured sparsity} Structured sparsity is an approach that replaces dense weight layers with different structured ones, such as matrix factorizations or tensor decompositions in order to compress or speed-up models \citep{pmlr-v162-dao22a,dao2022pixelated, novikov2015tensorizing, finetuned_cp}. Some of these techniques were also adapted to PEFT methods in works like \citep{karimi2021compacter, edalati2022krona, yang2024loretta} or BOFT \citep{liu2024parameterefficient} method, that utilizes a variation of butterfly matrices as a parametrization for parameter-efficient orthogonal matrices, imposing orthogonality to each butterfly factor. See details in Section~\ref{sec:oft}. Monarch matrices \citep{pmlr-v162-dao22a, fu2023monarch} are most relevant to our work as our proposed matrix class is their generalization that utilizes similar structure. 

\textbf{Subject-driven generation} 
The emergence of large text-to-image models \citep{ramesh2022hierarchical, ramesh2021zero, saharia2022photorealistic, rombach2022high} has propelled the advancement of personalized generation techniques in the research field. Customizing a text-to-image model to generate specific concepts based on multiple input images presents a key challenge. Various methods \citep{DB, TI, CD, svdiff, qiu2023controlling, profusion, elite, r1e} have been proposed to address this challenge, requiring either extensive fine-tuning of the model as a whole \citep{DB} or specific parts \citep{CD} to accurately reconstruct concept-related training images. While this facilitates precise learning of the input concept, it also raises concerns regarding overfitting, potentially limiting the model's flexibility in generating diverse outputs in response to different textual prompts.
Efforts to mitigate overfitting and reduce computational burden have led to the development of lightweight parameterization techniques \citep{qiu2023controlling, liu2024parameterefficient, hu2022lora, r1e, svdiff} such as those proposed among others. These methods aim to preserve editing capabilities while sacrificing some degree of concept fidelity. The primary objective is to identify parameterization strategies that enable high-quality concept learning without compromising the model's ability to edit and generate variations of the concept. Our investigation indicates that the orthogonal parameterization approach we propose represents a significant step towards achieving this goal.

\textbf{Orthogonal convolutions}
In \citep{li2019preventing, singla2021improved} authors discuss main issues of bounding of Lipschitz constant of neural networks and provide Gradient-Norm-Preserving (GNP) architecture in order to avoid vanishing of gradients while bounding Lipschitz constant. 
The authors propose a specific convolutional layer (Block Convolutional Orthogonal Parametrization) which Jacobian is orthogonal, also providing orthogonal activations with Lipschitz constant equal to 1. 
These constraints guarantee that the norm of the gradient will not change through backward pass. 
In other works \citep{singla2021skew, singla2021improved} authors provide a modification of the orthogonal convolutions (Skew Orthogonal Convolution) in terms of hardware-efficiency. 
Authors provide neural network architecture where each layer is 1-Lipschitz and make a comparison between these two convolutional layers.

\section{Proof of Prop.~\ref{prop:lr}}
\begin{proof}
    Let $R' = P R$. 
    $R'$ can be viewed as a block matrix with $k_L \times k_R$ blocks of sizes $b^L_2 \times b^R_2$. 
    $L$ can be viewed as a block matrix with $k_L \times k_L$ blocks from which only diagonal are non-zero. 
    The $A$ can be written in the following form: 
    \[
    \begin{pmatrix}
        A_{0,0} & \dots & A_{0,k_R - 1} \\
        \vdots & \ddots & \vdots \\
        A_{k_L - 1, 0} & \dots & A_{k_L - 1, k_R - 1}
    \end{pmatrix} = 
    \begin{pmatrix}
        L_{0} & \dots & 0 \\
        \vdots & \ddots & \vdots \\
        0 & \dots & L_{k_L - 1}
    \end{pmatrix}
    \begin{pmatrix}
        R'_{0,0} & \dots & R'_{0,k_R - 1} \\
        \vdots & \ddots & \vdots \\
        R'_{k_L - 1,0} & \dots & R'_{k_L - 1,k_R - 1}
    \end{pmatrix}.
    \]
    Using block matrix product formulas, we get:
    \[
        A_{k_1,k_2} = L_{k_1} R'_{k_1,k_2}.
    \]
    We can now rewrite $L_{k_1} R'_{k_1,k_2}$ product in terms of their columns and rows:
     \begin{equation}\label{eq:lr}
        L_{k_1} R'_{k_1,k_2} = 
        \begin{pmatrix}
            l_1 \dots l_{b_L^2}
        \end{pmatrix} 
        \cdot \begin{pmatrix}
            r_1^\top \\
            \vdots \\
            r_{b_L^2}^\top
        \end{pmatrix} = \sum_{t} l_t r_t^\top.
     \end{equation}
     Columns of $L_{k_1}$ are just vectors $u_j$ such that $\lfloor \frac{j}{k_L} \rfloor = k_1$. 
     Let us examine the rows of $R'_{k_1,k_2}$. 
     Since $R'$ is a matrix formed by permuting the rows of block-diagonal matrix $R$, $R'_{k_1,k_2}$ can only contain rows that were in the $R_{k_2}$ before permutation. 
     Formally, this means that $R'_{k_1,k_2}$ can only contain vector-rows $v_i^T$ such that $\lfloor \frac{i}{k_R} \rfloor = k_2$. 
     Additionally, rows after permutation should get into the $k_1$-block row. 
     That implies $\lfloor \frac{\sigma(i)}{k_L} \rfloor = k_1$. Other rows of $R'_{k_1,k_2}$ are zero-rows. 
     Notice that in~\eqref{eq:lr} non-zero rows $r_t^\top$ represented by $v_i^\top$ will match exactly with columns $u_{\sigma(i)}$ that represent $l_t$. 
     Keeping only non-zero terms in $\sum_t l_t r_t^\top $ gets us to the desired conclusion.
\end{proof}

\section{Comparison of Monarch matrices and \lpr-matrices}\label{sec:comp_monarch}

\lpr$(P_L, P, P_r)$ class is inspired by Monarch matrices \citep{pmlr-v162-dao22a} and their primary goal is to introduce additional flexibility in the block structure of matrices $L$ and $R$. 
Generalized Monarch matrices are parameterized as $P_1 L P_2 R$, where $L$ and $R$ are block-diagonal matrices and $P_1$ and $P_2$ are certain permutations defined in Definition \ref{def:perm}.
This resembles \lpr$(P_1, P_2, I)$ matrix class, however in comparison monarch matrices have additional hard constraints on relation between $k_L$ and $k_R$.
Being more precise, Monarch matrices are a special case of \lpr$(P_1, P_2, I)$ matrices with additional constraints $k_L = b^1_R$, $k_R = b^2_L$.
Such constraints lead to several theoretical and practical limitations of Monarch matrices. 
From theoretical point of view, Monarch matrices can only describe permuted block matrices with blocks of ranks $1$. 
In contrast {\lpr}-matrices with can describe matrices with different rank structure of blocks (including structures where rank of each block is equal to arbitrary $r$).
From practical point of view, due to this constraint Monarch matrices are often unable to form a desirable block structure of matrices $L$ and $R$.
For demonstration of this phenomena, consider a case of square matrices with square blocks -- the structure needed in Orthogonal fine-tuning paradigm.
Formally, we have $b_L^1 = b_L^2 = b_L$, $b^1_R = b^R_2 = b_R$, $m = n$. 
Additional Monarch constraint would mean that $b_R = k_L; \; b_L = k_R$.
This in turn means that $k_L \cdot k_R = n$. 
As we can see, it makes impossible to stack two matrices with small number of blocks (say, 4) or large number of blocks, which is required in situations with low parameter budget. 
In contrast, {\lpr} parametrization allows for both of these structures, which we use in our experiments.

Note, that the work \citep{fu2023monarch} provides a slightly different definition for monarch matrices, introducing order-$p$ Monarch matrices. These matrices are also a special case of \lpr~class, however they are very restrictive as they can only parametrize matrices with both sides equal to $a^p$ for some integers $a, p$.

\section{Proof of Theorem ~\ref{th:m}}\label{appx:th_m}
    We use information transition framework from \citep{liu2024parameterefficient}, representing product of $m$ sparse $d \times d$ matrices as an   transmitting information in a grid with $d \times (m + 1)$ nodes. Edges between nodes $j$ and $i$ represent that element $i, j$ in sparse matrix is non-zero.  Element $i, j$ from final matrix can only be non-zero if there exists a path from the $j$-th node from the right column to the $i$-th node in the left column (see Figure \ref{fig:distribution_example}).
\begin{figure}
    \begin{center}
        \begin{tikzpicture}
        \node[inner sep=0pt] (diagram) at (0,0)
        {\includegraphics[width=0.4\textwidth]{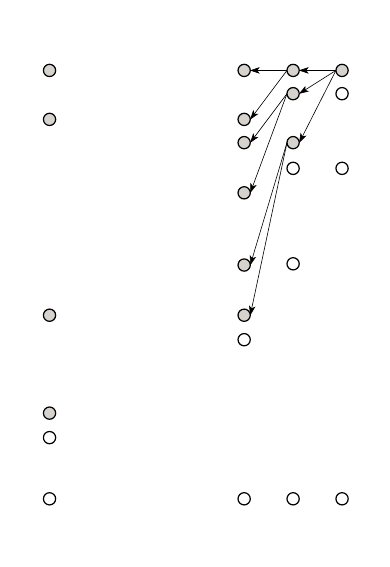}};
        \node[inner sep=0pt, font=\small] at (2.1,3.35) {$1$};
        \node[inner sep=0pt, font=\small] at (1.4,3.35) {$b$};
        \node[inner sep=0pt, font=\small] at (0.75,3.35) {$b^2$};
        \node[inner sep=0pt, font=\small] at (-2,3.35) {$b^k$};
        
        \node[inner sep=0pt, font=\small] at (-0.65,3.1) {$\dots$};
        \node[inner sep=0pt, font=\small] at (-0.65,2.35) {$\dots$};
        \node[inner sep=0pt, font=\small] at (-0.65,-0.45) {$\dots$};
        \node[inner sep=0pt, font=\small] at (-0.65,-3.1) {$\dots$};
        \node[inner sep=0pt, font=\small] at (2.1,2.25) {$\vdots$};
        \node[inner sep=0pt, font=\small] at (2.1,-0.25) {$\vdots$};
        \node[inner sep=0pt, font=\small] at (-2.1,1.1) {$\vdots$};
        \node[inner sep=0pt, font=\small] at (-2.1,-1) {$\vdots$};
        \node[inner sep=0pt, font=\small] at (-2.1,-1) {$\vdots$};
        \node[inner sep=0pt, font=\small] at (-2.1,2.8) {$\vdots$};
        \node[inner sep=0pt, font=\small] at (-2.1,-2.55) {$\vdots$};
        \node[inner sep=0pt, font=\small] at (0.7,2.8) {$\vdots$};
        \node[inner sep=0pt, font=\small] at (0.7,1.75) {$\vdots$};
        \node[inner sep=0pt, font=\small] at (0.7,0.85) {$\vdots$};
        \node[inner sep=0pt, font=\small] at (0.7,0) {$\vdots$};
        \node[inner sep=0pt, font=\small] at (0.7,-1.8) {$\vdots$};
        
        \node[inner sep=0pt, font=\small] at (1.4,2.45) {$\vdots$};
        \node[inner sep=0pt, font=\small] at (1.4,1.1) {$\vdots$};
        \node[inner sep=0pt, font=\small] at (1.4,-1.5) {$\vdots$};

        \end{tikzpicture}
    \end{center}
    \caption{Demonstration of information transition through a block structure. Each node is connected to exactly $b$ consecutive nodes from the next level.}
    \label{fig:distribution_example}
\end{figure}

\begin{proof}
    Consider an information transmission graph for the matrix $B_i P_{(r, br)}$. In this graph, the first node connects with $b$ first edges, the second node connects with the edges from $b + 1$ to $2b$ and so on. Now consider a graph for the product of $m$ such matrices. As shown in Figure \ref{fig:distribution_example}, now each node from the first level has paths to $b^k$ unique nodes from the $k$th-th level. It means that using $m = \lceil \log_b(d) \rceil = \lceil \log_b(br) \rceil = 1 + \lceil \log_b(r) \rceil$ matrices is sufficient to reach all nodes and therefore form a dense matrix. Note that the number of paths for each node is always equal to $b^m$ regardless of permutation choice. This observation shows that it is impossible to reach $d$ unique elements on the final level with $m < 1 + \lceil \log_b(r) \rceil$.  
\end{proof}

\section{Subject-driven generation}\label{appendix:comp_sd}
\textbf{Training details} All the models are trained using Adam optimizer with batch size = 4, learning rate = 0.00002, betas = (0.9, 0.999) and weight decay = 0.01. The Stable Diffusion-2-base model is used for all experiments.

\textbf{Evaluation details} We use the DreamBooth dataset for evaluation. The dataset contains $25$ different contextual prompts for $30$ various objects including pets, toys and furnishings. For each concept we generate $10$ images per contextual prompt and $30$ images per base prompt ``a photo of an S*'', resulting in $780$ unique concept-prompt pairs and a total of $8400$ images for fair evaluation.

To measure concept fidelity, we use the average pairwise cosine similarity (IS) between CLIP ViTB/32 embeddings of real and generated 
images as in ~\citep{TI}. This means that the image similarity is calculated using only the base prompt, i.e. ``a photo of an S*''. Higher values of this metric usually indicate better subject fidelity, while keeping this evaluation scene-independent. 
To evaluate the correspondence between generated images and contextual prompts (TS), the average cosine similarity between CLIP ViTB/32 embeddings of the prompt and generated images~\citep{DB,TI}.

\textbf{Additional results}
In Figures~\ref{fig:person-metrics} we show a graphical representation of the metrics for 1000 and 3000 iterations. Examples of generation for different methods are presented in Figure~\ref{fig:gsoft3000}, ~\ref{fig:gsoft1000}.

\begin{figure}[b]
  \centering
  \includegraphics[width=\linewidth]{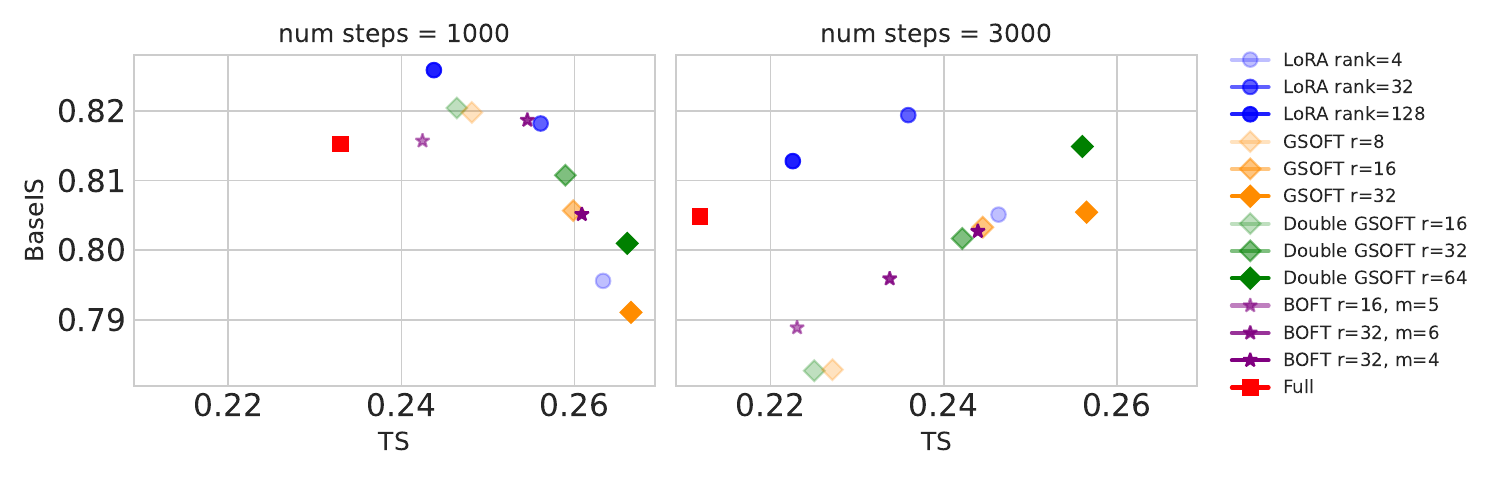}
  \caption{Image and text similarity visualisation for different methods on subject-driven generation.}
  \label{fig:person-metrics}
\end{figure}

\begin{figure}[t]
  \centering
  \includegraphics[width=\linewidth]{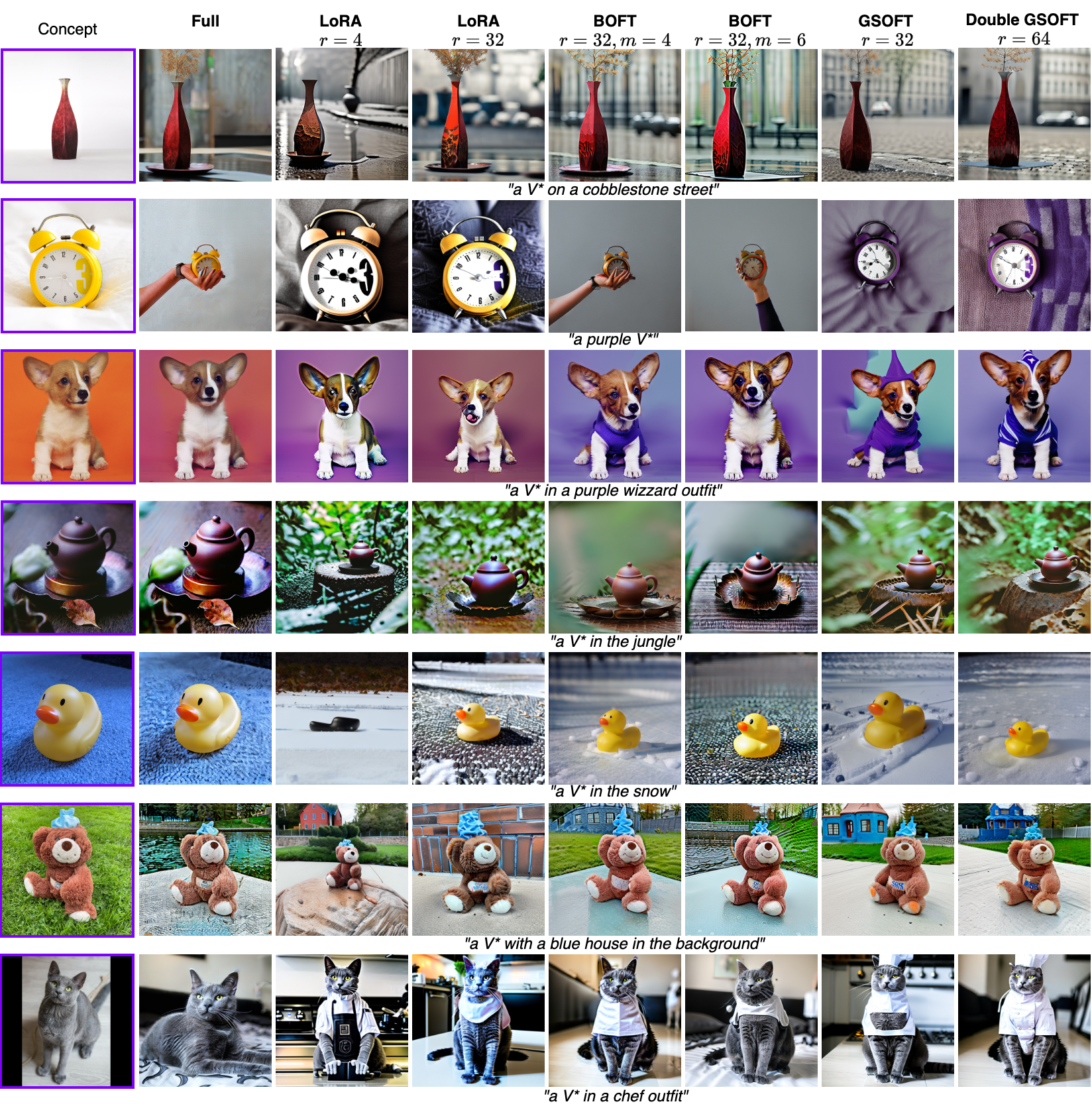}
  \caption{Subject-driven generation visual results on 3000 training iterations.}
  \label{fig:gsoft3000}
\end{figure}

\begin{figure}[t]
  \centering
  \includegraphics[width=\linewidth]{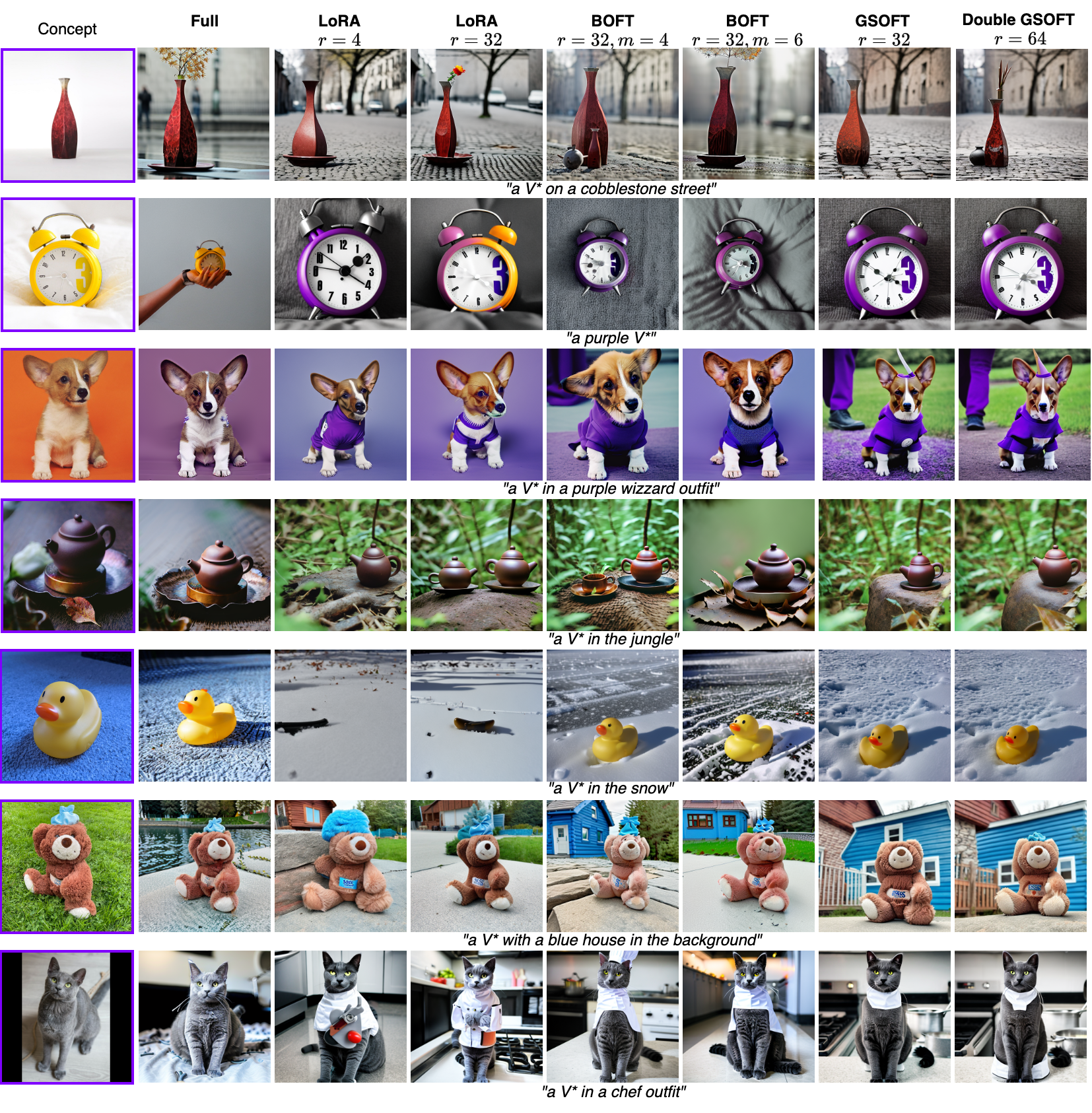}
  \caption{Subject-driven generation visual results on 1000 training iterations.}
  \label{fig:gsoft1000}
\end{figure}

\section{{\lpr} Orthogonal Convolution}\label{appendix:gs_soc}
In this section, we provide some details and insights about the choice of the \texttt{ChShuffle} permutation and the activation function.

In experiments, we apply the \texttt{ChShuffle} operation right before grouped convolutional layers. Stacking several layers of that form resembles higher-order \lpr-matrices, which motivates the usage of permutations from Definition~\ref{def:perm} for optimal information transition (see Appendix \ref{appx:th_m}).
However, in the LipConvnet architecture, the activation function can also shuffle information between channels. 
Thus, this additional shuffling of information can negatively affect our information transition properties.
In the original SOC paper \citep{singla2021skew}, the authors use MaxMin activation, firstly proposed in \cite{anil2019sorting}.
\begin{definition}\citep{singla2021skew}
    Given a feature tensor $X\in \mathbb{R}^{2m  \times n \times n}$, the $MaxMin(X)$ activation of a tensor $X$ is defined as follows:
\[
\begin{split}
    &A = X_{:m, :, :}, \ B = X_{m:,:, :}, \\
    &MaxMin(X)_{:m, :, :} = max(A, B), \\
    &MaxMin(X)_{m:,:, :} = min(A, B).
\end{split}
\]
\end{definition}
This activation shuffles information between different groups in convolution which harms performance of our experiments, as permutations that we use in \texttt{ChShuffle} become sub-optimal in terms of information transmission. 
Thus, we introduce a modification of MaxMin activation, that splits channels into pairs in a different way. Rather than constructing pairs from different halves of input tensor, we use neighboring channels for forming of pairs (first channel pairs with second, third with fourth and so on). With this modification information does not transfer between groups during activations, which enables more optimal information transmission in-between layers with \texttt{ChShuffle} operator. 
In further experiments we denote this activation function as MaxMinPermuted and define it below:
\begin{definition}
    Given a feature map $X\in \mathbb{R}^{2m  \times n \times n}$. $MaxMinPermuted(X)$ is defined as follows:
    \[
    A = X_{::2, :, :}, B = X_{1::2,:, :},
    \]
    \[
    MaxMinPermuted(X)_{::2, :, :} = max(A, B),
    \]
    \[
    MaxMinPermuted(X)_{1::2,:, :} = min(A, B)
    \]
\end{definition}

However, we also empirically find that it is crucial for the channels that interact within activations functions to also interact during convolutions. This means that they should always stay in the same group. This motivates us to use a slightly different permutation for the \texttt{ChShuffle} operation, which permutes channels in pairs. We use the following permutation
\[
    \sigma(i)^{paired}_{(k, n)} = \left( \left\lfloor \frac{i}{2} \right\rfloor ~\text{mod}~ k \right) \cdot \frac{n}{k} +  2 \cdot \left\lfloor \frac{i}{2k} \right\rfloor+ (i ~\text{mod}~ 2)
\]
This permutation can be seen as an adaptation of $P_{(k, n)}$ that operates on pairs of channels instead of single channels. This permutation is also optimal in terms of information transition. We call this permutation ``paired''. 
Using this paired permutation as a \texttt{ChShuffle} with our modified activation saves connection between pairs while also transmitting information in the most efficient way.
We provide the results of comparison of approaches with activations and permutations in Table~\ref{tab:paired}.
\begin{table}[H]
\small
\centering

\caption{Comparison of activations on LipConvnet-15 architecture and CIFAR-100. $(a, b)$ in ``Groups'' column denotes that we have two grouped exponential convolutions (the first one with $kernel\_size=3$, the second with $kernel\_size=1$). If $b$ is not mentioned, we have only one {\lpr} orthogonal convolutional layer.}\label{tab:paired}

\resizebox{\textwidth}{!}{
\begin{tabular}{c|ccccccc}
   \toprule \\
    Conv. Layer & \# Params & Groups & Speedup & Activation & Permutation & Accuracy & Robust Accuracy\\
    \hline
    SOC & 24.1M & - & 1 & MaxMin & - &43.15\% & 29.18\%\\
    
    \hline
    GS-SOC & \textbf{6.81M} & (4, -) & \textbf{1.64} & MaxMinPermuted & paired & \textbf{43.48\%} & \textbf{29.26\%}\\
    GS-SOC & \textbf{6.81M} & (4, -) & \textbf{1.64} & MaxMinPermuted & not paired& 40.46\% & 26.18\%\\
    GS-SOC & \textbf{6.81M} & (4, -) & \textbf{1.64} & MaxMin & paired& 37.99\% & 24.19\%\\
    GS-SOC & \textbf{6.81M} & (4, -) & \textbf{1.64} & MaxMin & not paired & 39.72\% & 25.96\%\\
    \hline
    GS-SOC & \textbf{8.91M} & (4, 1) & 1.21 & MaxMinPermuted & paired & \textbf{43.42\%} & \textbf{29.56\%}\\
    GS-SOC & \textbf{8.91M} & (4, 1) & 1.21 & MaxMinPermuted & not paired & 40.15\%& 26.4\%\\
    GS-SOC & \textbf{8.91M} & (4, 1) & 1.21 & MaxMin & paired & 40.3\%& 26.74\%\\
    GS-SOC & \textbf{8.91M} & (4, 1) & 1.21 & MaxMin & not paired & 41.7\%& 27.66\%\\
    \hline
    GS-SOC & \textbf{7.86M} & (4, 2) & 1.22 & MaxMinPermuted & paired & \textbf{42.86\%} & \textbf{28.98\%}\\
    GS-SOC & \textbf{7.86M} & (4, 2) & 1.22 & MaxMinPermuted & not paired& 41.13\% & 27.53\%\\
    
    GS-SOC & \textbf{7.86M} & (4, 2) & 1.22 & MaxMin & paired& 41.55\% & 27.45\%\\
    GS-SOC & \textbf{7.86M} & (4, 2) & 1.22 & MaxMin & not paired& 41.25\% & 27.29\%\\
    \hline
    GS-SOC & \textbf{7.3M} & (4, 4) & 1.23 & MaxMinPermuted & paired& \textbf{42.75\%} & \textbf{28.7\%}\\
    GS-SOC & \textbf{7.3M} & (4, 4) & 1.23 & MaxMinPermuted & not paired & 38.93\% & 25.59\%\\
    GS-SOC & \textbf{7.3M} & (4, 4) & 1.23 & MaxMin & paired & 40.34\% & 27.06\%\\
    GS-SOC & \textbf{7.3M} & (4, 4) & 1.23 & MaxMin & not paired & 41.57\% & 27.48\%\\
    \bottomrule
\end{tabular}
}

\end{table}

It can be seen that using ``paired'' permutation used with MinMaxPermuted activation significantly improves quality metrics.

\end{document}